\newcommand\car{{\mathbf 1}}
\newcommand\R{{\mathbf R}}
\renewcommand\P{{\mathbf P}}
\renewcommand\d{\text{ d}}
\newcommand\esp[2]{{\mathbf E}_{#1}\left[#2\right]}
\newcommand\Esp[1]{{\mathbf E}\left[#1\right]}
\newcommand\Var[1]{{\mathbf V}\left[#1\right]}
\newcommand\Nmax{{N_{\text{max}}}}
\newcommand\Ntot{{N_{\text{tot}}}}
\newcommand\Navail{{N_{\text{avail}}}}
\newcommand\SNR{{\text{SNR}}}
\newcommand\dom{{\text{Dom }}}
\newcommand\lip{{\text{Lip}}}
\newcommand\dref{d_{\text{ref}}}
\newtheorem{theorem}{Theorem}
\begin{document}

\title{Robust methods for LTE and WiMAX dimensioning}

%
%
%
%
%

\numberofauthors{4} 
%
\author{
  \alignauthor
  L. Decreusefond\\
  \affaddr{Institut Telecom}\\
  \affaddr{Telecom ParisTech}\\
  \affaddr{Paris, France}
  \alignauthor
  E. Ferraz\\
  \affaddr{Institut Telecom}\\
  \affaddr{Telecom ParisTech}\\
  \affaddr{Paris, France}
  \and \alignauthor
  P. Martins\\
  \affaddr{Institut Telecom}\\
  \affaddr{Telecom ParisTech}\\
  \affaddr{Paris, France}
  \alignauthor
  T.T. Vu\\
  \affaddr{Institut Telecom}\\
  \affaddr{Telecom ParisTech}\\
  \affaddr{Paris, France}
} \date{\today}

\maketitle{}
\begin{abstract}
  This paper proposes an analytic model for dimensioning OFDMA based
  networks like WiMAX and LTE systems. In such a system, users require
  a number of subchannels which depends on their \SNR, hence of their
  position and the shadowing they experience. The system is overloaded
  when the number of required subchannels is greater than the number
  of available subchannels. We give an exact though not closed
  expression of the loss probability and then give an algorithmic
  method to derive the number of subchannels which guarantees a loss
  probability less than a given threshold. We show that Gaussian
  approximation lead to optimistic values and are thus unusable. We
  then introduce Edgeworth expansions with error bounds and show that
  by choosing the right order of the expansion, one can have an
  approximate dimensioning value easy to compute but with guaranteed
  performance. As the values obtained are highly dependent from the
  parameters of the system, which turned to be rather undetermined, we
  provide a procedure based on concentration inequality for Poisson
  functionals, which yields to conservative dimensioning.  This paper
  relies on recent results on concentration inequalities and establish
  new results on Edgeworth expansions.
\end{abstract}

\keywords{Concentration inequality, Edgeworth expansion, LTE, OFDMA}

\section{Introduction}
\label{intro}
Future wireless systems will widely rely on OFDMA (Orthogonal
Frequency Division Multiple Access) multiple access technique. OFDMA
can satisfy end user's demands in terms of throughput. It also
fulfills operator's requirements in terms of capacity for high data
rate services. Systems such as 802.16e and 3G-LTE (Third Generation
Long Term Evolution) already use OFDMA on the downlink. Dimensioning
of OFDMA systems is then of the utmost importance for wireless
telecommunications industry.

OFDM (Orthogonal Frequency Division Multiplex) is a multi carrier
technique especially designed for high data rate services. It divides
the spectrum in a large number of frequency bands called (orthogonal)
subcarriers that overlap partially in order to reduce spectrum
occupation.  Each subcarrier has a small bandwidth compared to the
coherence bandwidth of the channel in order to mitigate frequency
selective fading. User data is then transmitted in parallel on each
sub carrier.  In OFDM systems, all available subcarriers are affected
to one user at a given time for transmission. OFDMA extends OFDM by
making it possible to share dynamically the available subcarriers
between different users. In that sense, it can then be seen as
multiple access technique that both combines FDMA and TDMA
features. OFDMA can also be possibly combined with multiple antenna
(MIMO) technology to improve either quality or capacity of systems.

\begin{figure}[!ht]
  \centering
  \includegraphics[width=\columnwidth]{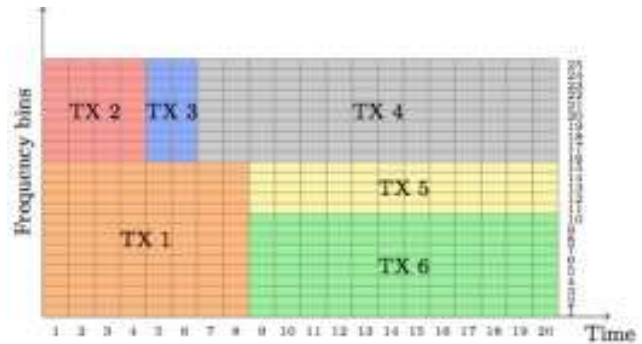}
  \caption{OFDMA principle : subcarriers are allocated according to
    the required transmission rate}
  \label{fig_dfmv_valuetools:ofdma}
\end{figure}

In practical systems, such as WiMAX or 3G-LTE, subcarriers are not
allocated individually for implementation reasons mainly inherent to
the scheduler design and physical layer signaling. Several subcarriers
are then grouped in subchannels according to different strategies
specific to each system. In OFDMA systems, the unit of resource
allocation is mainly the subchannels. The number of subchannels
required by a user depends on his channel's quality and the required
bit rate. If the number of demanded subchannels by all users in the
cell is greater than the available number of subchannel, the system is
overloaded and suffer packet losses. The questions addressed here can
then be stated as follows: how many subchannels must be assigned to a
BS to ensure a small overloading probability~? Given the number of
available subchannels, what is the maximum load, in terms of mean
number of customers per unit of surface, that can be tolerated~? Both
questions rely on accurate estimations of the loss probability.

The objectives of this paper are twofold: First, construct and analyze
a general performance model for an isolated cell equipped with an
OFDMA system as described above. We allows several classes of
customers distinguished by their transmission rate and we take into
account path-loss with shadowing.  We then show that for a Poissonian
configuration of users in the cell, the required number subchannels
follows a compound Poisson distribution.  The second objective is to
compare different numerical methods to solve the dimensioning
problem. In fact, there exists an algorithmic approach which gives the
exact result potentially with huge memory consumption. On the other
hand, we use and even extend some recent results on functional
inequalities for Poisson processes to derive some approximations
formulas which turn to be rather effective at a very low cost. When it
comes to evaluate the performance of a network, the quality of such a
work may be judged according to several criteria. First and foremost,
the exactness is the most used criterion: it means that given the
exact values of the parameters, the real system, the performances of
which may be estimated by simulation, behaves as close as possible to
the computed behavior. The sources of errors are of three kinds: The
mathematical model may be too rough to take into account important
phenomena which alter the performances of the system, this is known as
the epistemic risk. Another source may be in the mathematical
resolution of the model where we may be forced to use approximate
algorithms to find some numerical values. The third source lies in the
lack of precision in the determination of the parameters
characterizing the system: They may be hard, if not impossible, to
measure with the desired accuracy.  It is thus our point of view that
exactness of performance analysis is not all the matter of the
problem, we must also be able to provide confidence intervals and
robust analysis. That is why, we insist on error bounds in our
approximations.

Resources allocation on OFDMA systems have been extensively studied
over the last decade, often with joint power and subcarriers
allocation, see for instance
\cite{LowcomplexityresourceallocationwithopportunisticfeedbackoverdownlinkOFDMAnetworks,DownlinkschedulingandresourceallocationforOFDMsystems,OptimalResourceAllocationforOFDMADownlinkSystems,AdaptiveresourceallocationinmultiuserOFDMsystemswithproportionalrateconstraints}. The
problem of OFDMA planning and dimensioning have been more recently
under investigation. In
\cite{DimensioningofOFDMOFDMABasedCellularNetworksUsingExponentialEffectiveSINR},
the authors propose a dimensioning of OFDMA systems focusing on link
outage but not on the other parameters of the systems. In
\cite{DimensioningMethodologyforOFDMANetworksMaqbool}, the authors
give a general methodology for the dimensioning of OFDMA systems,
which mixes a simulation based determination of the distribution of
the signal-to-interference-plus-noise ratio (SINR) and a Markov chain
analysis of the traffic. In
\cite{DimensioningOfTheDownlinkInOFDMACellularNetworksViaAnErlangLossModel,Karray2010Analytical},
the authors propose a dimensioning method for OFDMA systems using
Erlang's loss model and Kaufman-Roberts recursion algorithm. In
\cite{FadingEffectOnTheDynamicPerformanceEvaluationOfOFDMACellularNetworks},
the authors study the effect of Rayleigh fading on the performance of
OFDMA networks.

The article is organized as follows. In Section
\ref{sec:system-model}, we describe the system model and set up the
problem. In Section \ref{sec:loss-probability}, we examine four
methods to derive an exact, approximate or robust value of the number
of subchannels necessary to ensure a given loss probability. In
Section \ref{sec:appl-ofdma-lte}, we apply these formulas to the
particular situation of OFDMA systems. A new bound for the Edgeworth
expansion is in Section \ref{sec:edgeworth-expansion} and Section
\ref{sec:conc-ineq} contains a new proof of the concentration
inequality established for instance in
\cite{Wu:2000lr}.

\section{System Model}
\label{sec:system-model}


In practical systems, such as WiMAX or 3G-LTE, resource allocation
algorithms work at subchannel level. The subcarriers are grouped into
subchannels that the system allocates to different users according to
their throughput demand and mobility pattern. For example, in WiMAX,
there are three modes available for building subchannels: FUSC (Fully
Partial Usage of Subchannels), PUSC (Partial Usage of SubChannels) and
AMC (Adaptive modulation and coding). In
FUSC, subchannels are made of subcarriers spread over all the
frequency band. This mode is generally more adapted to mobile
users. In AMC, the subcarriers of a subchannel are adjacent instead of
being uniformly distributed over the spectrum. AMC is more adapted to
nomadic or stationary users and generally provides higher capacity.

The grouping of subcarriers into subchannels raises the problem of the
estimation of the quality of a subchannel. Theoretically channel
quality should be evaluated on each subcarrier of the corresponding
subchannel to compute the associated capacity. This work assumes that
it is possible to consider a single channel gain for all the
subcarriers making part of a subchannel (for example via channel gains
evaluated on pilot subcarriers).

We consider a circular cell $C$ of radius $R$ with a base station (BS
for short) at its center. The transmission power dedicated to each
subchannel by the base station is denoted by $P$. Each subchannel has
a total bandwidth $W$ (in kHz).  The received signal power for a
mobile station at distance $d$ from the BS can be expressed as
\begin{equation}\label{eq_dfmv_valuetools:7}
  P(d)= \frac{PK_\gamma}{d^{\gamma}}GF:=P_\gamma G d^{-\gamma},
\end{equation}
where $K_\gamma$ is a constant equal to the attenuation at a reference
distance, denoted by $\dref$, that separates far field from near field
propagation. Namely,
\begin{equation*}
  K_\gamma=\left(\frac{c}{4\pi f \dref}\right)^2\dref^\gamma,
\end{equation*}
where $f$ is the radio-wave frequency.  The variable $\gamma$ is the
path-loss exponent which indicates the power at which the path loss
increases with distance. Its depends on the specific propagation
environment, in urban area, it is in the range from 3 to 5.  It must
be noted that this propagation model is an approximate model,
difficult to calibrate for real life situations.  In particular, it
might be reasonable to envision models where $\gamma$ depends on the
distance so that the attenuation would be proportional to
$d^{\gamma(d)}$. Because of the complexity of such a model, $\gamma$
is often considered as constant but the path-loss is multiplied by two
random variables $G$ and $F$ which represent respectively the
shadowing, i.e. the attenuation due to obstacles, and the Rayleigh
fading, i.e. the attenuation due to local movements of the mobile.
Usually, $G$ is taken as a log-normal distribution: $G = 10^{S/10}$,
where $S\sim\mathcal{N}(\kappa,\,v^2)$. As to $F$, it is customary to
choose an exponential distribution with parameter $1$. Both, the
shadowing and the fading experienced by each user are supposed to be
independent from other users' shadowing and fading. For the sake of
simplicity, we will here treat the situation where only shadowing is
taken into account, the computations would be pretty much like the
forthcoming ones and the results rather similar should we consider
Rayleigh fading.

All active users in the cell compete to have access to some of the
$\Navail$ available subchannels. There are $K$ classes of users
distinguished by the transmission rate they require: $C_k$ is the rate
of class $k$ customers and $\tau_k$ denotes the probability that a
customer belongs to class $k$.  A user, at distance $d$ from the BS,
is able to receive the signal only if the
signal-to-interference-plus-noise ratio $\SNR=\frac{P(d)}{I}$ is above
some constant $\beta_{min}$ where $I$ is the noise plus interference
power and $P(d)$ is the received signal power at distance $d$, see
\eqref{eq_dfmv_valuetools:7}. If the $\SNR$ is below the critical
threshold, then the user is said to be in outage and cannot proceed
with his communication.

To avoid excess demands, the operator may impose a maximum number
$\Nmax$ of allocated subchannels to each user at each time
slot. According to the Shannon formula, for a user demanding a service
of bit rate $C_k$, located at distance $d$ from the BS and
experiencing a shadowing $g$, the number of requires subchannels is
thus the minimum of $\Nmax$ and of
\begin{equation*}
  N_{\text{user}}=
  \begin{cases}
    \left\lceil\dfrac{C_k}{W\log_2 \left(1+P_\gamma g d^{-\gamma}/I\right)}\right\rceil & \text{ if } P_\gamma g d^{-\gamma}/I\geq\beta_{min},\\
    0 & \text{ otherwise,}
  \end{cases}
\end{equation*}
where $\left\lceil x\right\rceil$ means the minimum integer number not
smaller than $x$.

We make the simplifying assumption that the allocation is made at
every time slot and that there is no buffering neither in the access
point nor in each mobile station. All the users have independently
from others a probability $p$ to have a packet to transmit at each
slot. This means, that each user has a traffic pattern which follows a
geometric process of intensity $p$. We also assume that users are
dispatched in the cell according to a Poisson process of intensity
$\lambda_0$. According to the thinning theorem for Poisson processes,
this induces that active users form a Poisson process of intensity
$\lambda=\lambda_0p$. This intensity is kept fixed over the time. That
may result from two hypothesis: Either we consider that for a small
time scale, users do not move significantly and thus the configuration
does not evolve. Alternatively, we may consider that statistically,
the whole configuration of active users has reached its equilibrium so
that the distribution of active users does not vary through time
though each user may move.

From the previous considerations, a user is characterized by three
independent parameters: his position, his class and the intensity of
the shadowing he is experiencing. We model this as a Poisson process
on $E=B(0,\, R)\times \{1,\,\cdots,\, K\} \times \R^+$ of intensity
measure
\begin{equation*}
  \lambda \d\nu(x):=\lambda (\d x \otimes \d \tau(k)\otimes \d \rho(g))
\end{equation*}
where $B(0,\, R)=\{x\in \R^2, \, \|x\|\le R\}$, $\tau$ is the
probability distribution of classes given by $\tau(\{k\})=\tau_k$ and
$\rho$ is the distribution of the random variable $G$ defined above.
We set
\begin{multline*}
  f(x,\, k,\, g)
  =\min\left(\Nmax,\right.\\
  \left.  \car_{\left\{P_\gamma g \|x\|^{-\gamma}\geq
        I\beta_{min}\right\}}\left\lceil\frac{C_k}{W\log_2
        \left(1+P_\gamma g
          \|x\|^{-\gamma}/I\right)}\right\rceil\right).
\end{multline*}
With the notations of Section \ref{sec:edgeworth-expansion},
\begin{equation*}
  \Ntot
  =\int_{\text{cell}}
  f(x,\, k,\, g) \  \d\omega(x,\, k,\, g).
\end{equation*}
We are interested in the loss probability which is given by
\begin{equation*}
  \P(\Ntot\ge \Navail).
\end{equation*}
We first need to compute the different moment of $f$ with respect to
$\nu$ in order to apply Theorem~\ref{thm_dfmv_valuetools:1} and
Theorem~\ref{thm_dfmv_valuetools:2}. For, we set
\begin{equation*}
  l_k=\Nmax \ \wedge  \left\lceil\frac{C_k}{W\log_2
      (1+\beta_{\min})}\right\rceil,
\end{equation*}
where $a\wedge b=\min(a,\, b)$. Furthermore, we introduce $\beta_{k,\,
  0}=\infty$,
\begin{equation*}
  \beta_{k,\, l}=\frac{I}{P}\left(2^{C_k/Wl}-1\right), \, 1\le k\le K,\ 1\le l\le l_k-1,
\end{equation*}
and $\beta_{k,\, l_k}=I\beta_{\min}/P.$

By the very definition of the ceiling function, we have
\begin{multline*}
  \int_E f^p\d\nu\\= \sum_{k=1}^K \tau_k\sum_{l=1}^{l_k}l^p\
  \int_{\text{cell}}\int_\R \car_{[\beta_{k,\, l};\, \beta_{k,\,
      l-1})}(g\|x\|^{-\gamma})\d\rho(g)\d x.
\end{multline*}
According to the change of variable formula, we have
\begin{multline*}
  \int_{\text{cell}} \car_{[\beta_{k,\, l};\, \beta_{k,\, l-1})}(g\|x\|^{-\gamma})\d x\\
  =\pi(\beta_{k,\, l}^{-2/\gamma}\wedge R^2 -\beta_{k,\,
    l-1}^{-2/\gamma}\wedge R^2)g^{2/\gamma}.
\end{multline*}
Thus, we have
\begin{multline*}
  \int_{\text{cell}}\int_\R \car_{[\beta_{k,\, l};\, \beta_{k,\, l-1})}(g\|x\|^{-\gamma})\d\rho(g)\d x\\
  \begin{aligned}
    &=\pi(\beta_{k,\, l}^{-2/\gamma}\wedge R^2-\beta_{k,\, l-1}^{-2/\gamma}\wedge R^2)\esp{}{10^{S/5\gamma}}\\
    &=\pi(\beta_{k,\, l}^{-2/\gamma}\wedge R^2-\beta_{k,\,
      l-1}^{-2/\gamma}\wedge R^2)\ 10^{(\kappa+\frac{v^2}{10\gamma}\ln
      10)/5\gamma}:=\zeta_{k,\, l}.
  \end{aligned}
\end{multline*}
We thus have proved the following theorem.
\begin{theorem}
  For any $p\ge 0$, with the same notations as above, we have:
  \begin{equation}
    \label{eq_dfmv_valuetools:4}
    \int f^p\d\nu=\sum_{k=1}^K \tau_k\sum_{l=1}^{l_k}l^p\, \zeta_{k,\, l}.
  \end{equation}
\end{theorem}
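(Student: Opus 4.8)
The plan is to exploit that, for a fixed class $k$, the integrand $f$ depends on $(x,g)$ only through the scalar $u := g\|x\|^{-\gamma}$ (which determines the \SNR) and takes only the integer values $0,1,\ldots,l_k$. I would first write
\begin{equation*}
  f^p = \sum_{l=1}^{l_k} l^p\,\car_{\{f=l\}},
\end{equation*}
discarding the outage set $\{f=0\}$, on which $f^p=0$ for $p>0$ (and adopting $0^0=0$ for $p=0$, so that outage users never contribute to the right-hand side, which only runs over $l\ge 1$), and then integrate each level set separately.

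The core of the argument is to identify $\{f=l\}$ with a band for $u$. Since the per-subchannel capacity $\log_2(1+P_\gamma u/I)$ is strictly increasing in $u$, the pre-ceiling demand $C_k/(W\log_2(1+P_\gamma u/I))$ is strictly decreasing, so its ceiling equals $l$ on a single half-open interval of $u$. Inverting the defining inequality $l-1 < C_k/(W\log_2(1+P_\gamma u/I)) \le l$ and solving for $u$ yields exactly the thresholds $\beta_{k,l}$, with $\beta_{k,0}=\infty$ encoding the absence of an upper constraint at $l=1$ (an arbitrarily strong signal still needs one subchannel, so this level set is a full disk). The two boundary effects are absorbed into $l_k=\Nmax\wedge\lceil C_k/(W\log_2(1+\beta_{\min}))\rceil$ and $\beta_{k,l_k}=I\beta_{\min}/P$: the saturation at $\Nmax$ collapses all higher demands into the top level, while the outage cut-off $\beta_{\min}$ fixes the lowest admissible value of $u$. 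Hence $\{f=l\}=\{(x,k,g):g\|x\|^{-\gamma}\in[\beta_{k,l},\beta_{k,l-1})\}$, which is the decomposition announced by ``the very definition of the ceiling function''.

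With each level set written through the single quantity $g\|x\|^{-\gamma}$, the integration factorizes. The spatial integral is handled by the change-of-variable identity already recorded above, which in polar coordinates turns the annulus $\{\beta_{k,l}\le g\|x\|^{-\gamma}<\beta_{k,l-1}\}$ into the area $\pi(\beta_{k,l}^{-2/\gamma}\wedge R^2-\beta_{k,l-1}^{-2/\gamma}\wedge R^2)\,g^{2/\gamma}$, the cap at $R^2$ reflecting truncation to the cell. It then remains to integrate the residual factor $g^{2/\gamma}$ against the shadowing law $\rho$, i.e.\ to compute the moment $\Esp{G^{2/\gamma}}=\Esp{10^{S/5\gamma}}$ for $S\sim\mathcal N(\kappa,v^2)$; this is a Gaussian moment-generating-function evaluation giving $10^{(\kappa+\frac{v^2}{10\gamma}\ln 10)/5\gamma}$. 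The product of these two factors is precisely $\zeta_{k,l}$, and summing over the levels $l$ and over the classes $k$ with weights $\tau_k$ delivers \eqref{eq_dfmv_valuetools:4}.

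The one genuinely delicate step is the inversion of the ceiling: I must keep careful track of strict versus non-strict inequalities so that the intervals $[\beta_{k,l},\beta_{k,l-1})$ tile the admissible range of $u$ with neither overlap nor gap, and I must check that the two edge cases---the hard cap at $\Nmax$ and the outage threshold $\beta_{\min}$---are correctly folded into the definitions of $l_k$, $\beta_{k,0}$ and $\beta_{k,l_k}$. The subsequent change of variables and the log-normal moment are routine.
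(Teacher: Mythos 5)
Your proof is correct and follows essentially the same route as the paper: decompose $f^p$ over the level sets $\{f=l\}$ determined by inverting the ceiling to get the bands $g\|x\|^{-\gamma}\in[\beta_{k,l},\beta_{k,l-1})$, compute the spatial integral by the change of variables, and finish with the log-normal moment to obtain $\zeta_{k,l}$. Your explicit attention to the tiling of the intervals, the $\Nmax$ and $\beta_{\min}$ edge cases, and the $p=0$ convention is a welcome elaboration of what the paper compresses into ``by the very definition of the ceiling function''.
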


\section{Loss probability}
\label{sec:loss-probability}

\subsection{Exact method}
\label{sec:exact-methode}
Since $f$ is deterministic, $\Ntot$ follows a compound Poisson
distribution: it is distributed as
\begin{equation*}
  \sum_{k=1}^K\sum_{l=1}^{l_k} l \, N_{k,\, l}
\end{equation*}
where $(N_{k,\, l},\, 1\le k\le K, \, 1\le l\le l_k)$ are independent
Poisson random variables, the parameter of $N_{k,\, l}$ is $\lambda
\tau_k\zeta_{k,\, l}.$ Using the properties of Poisson random
variables, we can reduce the complexity of this expression. Let
$L=\max(l_k, \, 1\le k\le K)$ and for $l\in \{1,\,\cdots,\, L\}$, let
$K_l=\{k,\, l_k\ge l\}$. Then, $\Ntot$ is distributed as
\begin{equation*}
  \sum_{l=1}^{L} l \, M_l
\end{equation*}
where $(M_l, \, 1\le l\le l_k)$ are independent Poisson random
variables, the parameter of $M_l$ being $m_l:=\sum_{k\in
  K_l}\lambda\tau_k\zeta_{k,\, l}.$ For each $l$, it is easy to
construct an array which represents the distribution of $lM_l$ by the
following rule:
\begin{equation*}
  p_{l}(w)=
  \begin{cases}
    0 & \text{ if } w \mod l \not = 0,\\
    \exp(-m_l)m_l^q/q! &\text{ if } w=ql.
  \end{cases}
\end{equation*}
By discrete convolution, the distribution of $\Ntot$ and then its
cumulative distribution function, are easily calculable. The value of
$\Navail$ which ensures a loss probability below the desired threshold
is found by inspection. The only difficulty with this approach is to
determine where to truncate the Poisson distribution functions for
machine representation. According to large deviation theory
\cite{MR2571413},
\begin{equation*}
  \P(\text{Poisson}(\theta)\ge a \theta)\le \exp(-\theta(a\ln a +1-a)).
\end{equation*}
When $\theta$ is known, it is straightforward to choose $a(\theta)$ so
that the right-hand-side of the previous equation is smaller than the
desired threshold. The total memory size is thus proportional to
$\max(m_la(m_l)l,\, 1\le l\le l_k)$. This may be memory (and time)
consuming if the parameters of some Poisson random variables or the
threshold are small.  This method is well suited to estimate loss
probability since it gives exact results within a reasonable amount of
time but it is less useful for dimensioning purpose. Given $\Navail$,
if we seek for the value of $\lambda$ which guarantees a loss
probability less than the desired threshold, there is no better way
than trial and error. At least, the subsequent methods even imprecise
may help to evaluate the order of magnitude of $\lambda$ for the first
trial.

\subsection{Approximations}
\label{sec:approximations}
We begin by the classical Gaussian approximation. It is clear that
\begin{multline*}
  \P(\int_E f\d\omega\ge \Navail)=\P(\int_E f_\sigma (\d\omega-\lambda\d\nu)\ge N_\sigma)\\
  =\esp{\lambda\nu}{\car_{[N_\sigma,\, +\infty)}(\int_E f_\sigma
    (\d\omega-\lambda\d\nu))}
\end{multline*}
where $N_\sigma=(\Navail-\int f\lambda \d\nu)/\sigma$. Since the
indicator function $\car_{[N_\sigma,\, +\infty)}$ is not Lipschitz, we
can not apply the bound given by
Theorem~\ref{thm_dfmv_valuetools:1}. However, we can upper-bound the
indicator by a continuous function whose Lipschitz norm is not greater
than $1$. For instance, taking
\begin{equation*}
  \phi(x)=\min(x^+, 1) \text{ and } \phi_N(x)=\phi(x-N),
\end{equation*}
we have
\begin{equation*}
  \car_{[N_\sigma+1,\, +\infty)}\le \phi_{N_\sigma+1}\le  \car_{[N_\sigma,\, +\infty)}\le \phi_{N_\sigma-1}\le \car_{[N_\sigma -1,\, +\infty)}.
\end{equation*}
Hence,
\begin{multline}
  \label{eq:2}
  1-Q(N_\sigma+1)-\frac{1}{2}\sqrt{\frac2\pi}\frac{m(3,\, 1)}{\sqrt{\lambda}}\\
  \le    \P(\int_E f\d\omega\ge \Navail)\le \\
  1-Q(N_\sigma-1)+\frac{1}{2}\sqrt{\frac2\pi}\frac{m(3,\,
    1)}{\sqrt{\lambda}},
\end{multline}
where $Q$ is the cumulative distribution function of a standard
Gaussian random variable.

According to Theorem \ref{thm_dfmv_valuetools:2}, one can proceed with
a more accurate approximation. Via polynomial interpolation, it is
easy to construct a ${\mathcal C}^3$ function $\psi_N^l$ such that
\begin{equation*}
  \|(\psi_N^l)^{(3)}\|_\infty \le 1\text{ and } \car_{[N_\sigma+3.5,\, +\infty)}\le \psi_{N_\sigma}^l\le  \car_{[N_\sigma,\, +\infty)}
\end{equation*} and a function $\psi_N^r$ such that 
\begin{equation*}
  \|(\psi_N^r)^{(3)}\|_\infty \le 1\text{ and }  \car_{[N_\sigma,\, +\infty)} \le \psi_{N_\sigma}^r\le \car_{[N_\sigma-3.5,\, +\infty)}
\end{equation*}
From \eqref{eq:7}, it follows that
\begin{multline}
  \label{eq:3}
  1-Q(N_\sigma+3.5)-\frac{m(3,\, 1)}{6\sqrt{\lambda}}Q^{(3)}(N_\sigma+3.5)-E_\lambda\\
  \le \P(\int_E f\d\omega\ge \Navail)\le \\
  1-Q(N_\sigma-3.5)+\frac{m(3,\,
    1)}{6\sqrt{\lambda}}Q^{(3)}(N_\sigma-3.5)+E_\lambda
\end{multline}
where $E_\lambda$ is the right-hand-side of \eqref{eq:4} with
$\|F^{(3)}\|_\infty=1$.

Going again one step further, following the same lines, according to
\eqref{eq:edgeworth2}, one can show that
\begin{multline}
  \label{eq:6}
  \P(\int_E f\d\omega\ge \Navail)\le 1-Q(N_\sigma-6.5)\\+\frac{m(3,\,
    1)}{6\sqrt{\lambda}}Q^{(3)}(N_\sigma-6.5) +\frac{m(3,1)^2}{72
    \lambda}Q^{(5)}(N_\sigma-6.5)\\+\frac{m(4,1)}{24\lambda}Q^{(3)}(N_\sigma-6.5)+F_\lambda
\end{multline}
where $F_\lambda$ is bounded above in \eqref{eq_dfmv_valuetools:8}.

For all the approximations given above, for a fixed value of
$\Navail$, an approximate value of $\lambda$ can be obtained by
solving numerically an equation in $\sqrt{\lambda}$.
\subsection{Robust upper-bound}
\label{sec:robust-upper-bound}
If we seek for robustness and not precision, it may be interesting to
consider the so-called concentration inequality. We remark that in the
present context, $f$ is non-negative and bounded by $L=\max_k l_k$ so
that we are in position to apply Theorem
\ref{thm_dfmv_valuetools:3}. We obtain that
\begin{multline}\label{eq:5}
  \P(\int_E f\d\omega\ge \int_E f\d\nu + a)\\
  \le \exp\left(-\frac{\int_E f^2\lambda\d\nu}{L^2}g(\frac{aL}{\int_E
      f^2\lambda\d\nu})\right),
\end{multline}
where $g$ is defined in Section~\ref{sec:conc-ineq}.

\section{Applications to OFDMA and LTE}
\label{sec:appl-ofdma-lte}
In such systems, there is a huge number of physical parameters with a
wide range of variations, it is thus rather hard to explore the while
variety of sensible scenarios. For illustration purposes, we chose a
circular cell of radius $R=300$ meters equipped with an isotropic
antenna such that the transmitted power is $1$ W and the reference
distance is $10$ meters. The mean number of active customers per unit
of surface, denoted by $\lambda$, was chosen to vary between $0,001$
and $0.000\, 1$, this corresponds to an average number of active
customers varying from $3$ to $30$, a realistic value for the systems
under consideration. The minimum SINR is $0.3$~dB and the random
variable $S$ defined above is a centered Gaussian with variance equal
to $10$. There are two classes of customers, $C_1=1,000$ kb/s and
$C_2=400$ kb/s. It must be noted that our set of parameters is not
universal but for the different scenarios we tested, the numerical
facts we want to point out were always apparent.
Since the time scale is of the order of a packet transmission time,
the traffic is defined as the mean number of required subchannels at
each slot provided that the time unit is the slot duration, that is to
say that the load is defined as $\rho=\lambda\int_{\text{cell}}f\d
\nu$.

\begin{figure}[!ht]
  \centering
  \includegraphics[width=\columnwidth]{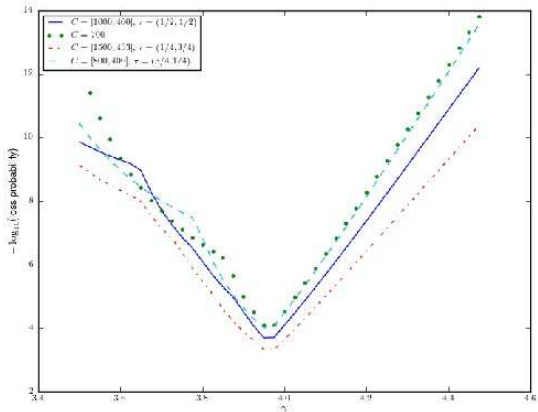}
  \caption{Impact of $\gamma$ and $\tau$ on the loss probability
    ($\Navail=92$, $\lambda=0.0001$)}
  \label{fig_dfmv_valuetools:gamma_tau}
\end{figure}

Figure \ref{fig_dfmv_valuetools:gamma_tau} shows, the loss probability
may vary up to two orders of magnitude when the rate and the
probability of each class change even if the mean rate $\sum_k \tau_k
C_k$ remains constant. Thus mean rate is not a sufficient parameter to
predict the performances of such a system.  The load $\rho$ is neither
a pertinent indicator as the computations show that the loads of the
various scenarios differs from less than $3\%$.

Comparatively, Figure \ref{fig_dfmv_valuetools:gamma_tau} shows that
variations of $\gamma$ have tremendous effects on the loss
probability: a change of a few percents of the value of $\gamma$
induces a variation of several order of magnitude for the loss
probability.  It is not surprising that the loss probability increases
as a function of $\gamma$: as $\gamma$ increases, the radio
propagation conditions worsen and for a given transmission rate, the
number of necessary subchannels increases, generating
overloading. Beyond a certain value of $\gamma$ (apparently around
3.95 on Figure \ref{fig_dfmv_valuetools:gamma_tau}), the radio
conditions are so harsh that a major part of the customers are in
outage since they do not satisfy the $\SNR$ criterion any longer.  We
remark here that the critical value of $\gamma$ is almost the same for
all configurations of classes. Indeed, the critical value $\gamma_c$
of $\gamma$ can be found by a simple reasoning: When
$\gamma<\gamma_c$, a class $k$ customer uses less than the allowed
$l_k$ subchannels because the radio conditions are good enough for
$\beta_{k,\, j}^{1/\gamma}\ge R$ for some $j<l_k$ so that the load
increases with $\gamma$. For $\gamma>\gamma_c$, all the $\beta_{k,\
  l}^{-1/\gamma}$ are lower than $R$ and the larger $\gamma$, the
wider the gap. Hence the number of customers in outage increases as
$\gamma$ increases and the load decreases. Thus,
\begin{equation*}
  \gamma_c \simeq \inf\{\gamma, \beta_{s,\, l_s-1}^{-1/\gamma}\le R\} \text{ for } s=\text{arg max}_k l_k.
\end{equation*}
If we proceed this way for the data of Figure
\ref{fig_dfmv_valuetools:gamma_tau}, we retrieve $\gamma_c=3.95$. This
means that for a conservative dimensioning, in the absence of estimate
of $\gamma$, computations may be done with this value of $\gamma$.

For a threshold given by $\epsilon=10^{-4}$, we want to find $\Navail$
such that $\P(\Ntot\ge \Navail)\le \epsilon$. As said earlier, the
exact method gives the result at the price of a sometimes lengthy
process. In view of \ref{eq:2}, one could also search for $\alpha$
such that
\begin{equation}\label{eq_dfmv_valuetools:5}
  1-Q(\alpha)+\frac12\sqrt{\frac 2\pi}m(3,\lambda)=\epsilon
\end{equation}
and then consider $\lceil1+\int_E f\d\nu +\alpha \sigma\rceil$ as an
approximate value of $\Navail$. Unfortunately and as was expected
since the Gaussian approximation is likely to be valid for
\textit{large} values of $\lambda$, the corrective term in
\eqref{eq_dfmv_valuetools:5} is far too large (between $30$ and $500$
depending on $\gamma$) for \eqref{eq_dfmv_valuetools:5} to have a
meaning. Hence, we must proceed as usual and find $\alpha$ such that
$1-Q(\alpha)=\epsilon$, i.e. $\alpha \simeq 3.71$. The approximate
value of $\Navail$ is thus given by $\lceil\int_E f\d\nu +3.71
\sigma\rceil$. The consequence is that we do not have any longer any
guarantee on the quality of this approximation, how close it is to the
true value and even more basic, whether it is greater or lower than
the correct value. In fact, it is absolutely impossible to choose a
dimensioning value lower than the true value since there is no longer
a guarantee that the loss probability is lower than~$\epsilon$. As
shows Figure \ref{fig_dfmv_valuetools:dimension}, it turns out that
the values returned by the Gaussian method are always under the true
value. Thus this annihilates any possibility to use the Gaussian
approximation for dimensioning purposes.

Going one step further, according to \eqref{eq:3}, one may find
$\alpha$ such that
\begin{equation*}
  1-Q(\alpha)-\frac{m(3,\, \lambda)}{6}Q^{(3)}(\alpha)+E_\lambda=\epsilon
\end{equation*}
and then use
\begin{equation*}
  \lceil{3.5+\int_E f\d\nu +\alpha \sigma}\rceil
\end{equation*}
as an approximate \textit{guaranteed} value of $\Navail$. By
\textit{guaranteed}, we mean that according to \eqref{eq:3}, it holds
for sure that the loss probability with this value of $\Navail$ is
smaller than $\epsilon$ even if there is an approximation process
during its computation. Since the error in the Edgeworth approximation
is of the order of $1/\lambda$, instead of $1/\sqrt{\lambda}$ for the
Gaussian approximation, one may hope that this method will be
efficient for smaller values of $\lambda$. It turns out that for the
data sets we examined, $E_\lambda$ is of the order of
$10^{-7}/\lambda$, thus this method can be used as long as
$10^{-7}/\lambda \ll \epsilon$. Otherwise, as for the Gaussian case,
we are reduced to find $\alpha$ such that
\begin{equation*}
  1-Q(\alpha)-\frac{m(3,\, \lambda)}{6}Q^{(3)}(\alpha)=\epsilon
\end{equation*}
and consider $\lceil{3.5+\int_E f\d\nu +\alpha \sigma}\rceil$ but we
no longer have any guarantee on the validity of the value. As Figure
\ref{fig_dfmv_valuetools:dimension} shows, for the considered data
set, Edgeworth methods leads to an optimistic value which is once
again absolutely not acceptable. One can pursue the development as in
\eqref{eq:edgeworth2} and use \eqref{eq:6}, thus we have to solve
\begin{multline*}
  1-Q(\alpha)-\frac{m(3,\, \lambda)}{6}Q^{(3)}(\alpha)\\
  -\frac{m(3,1)^2}{72
    \lambda}Q^{(5)}(\alpha)+\frac{m(4,1)}{24\lambda}Q^{(3)}(\alpha)
  -F_\lambda=\epsilon.
\end{multline*}
For the analog of \ref{eq:3} to hold, we have to find $\Psi$ a
$\mathcal C^5_b$ function greater than $\car_{[x,\,\infty)}$ but
smaller than $\car_{[x-\text{lag},\, \infty)}$ with a fifth derivative
smaller than $1$. Looking for $\Psi$ in the set of polynomial functions, we can find such
a function only if $\text{lag}$ is  greater than $6.5$ (for smaller value of the lag, the fifth derivative is not bounded by $1$) thus the
dimensioning value has to be chosen as:
\begin{equation*}
  \lceil{6.5+\int_E f\d\nu +\alpha \sigma}\rceil.
\end{equation*}
For the values we have, it turns out that $F_\lambda$ is of the order
of $10^{-9}\lambda^{-3/2}$ which is negligible compared to
$\epsilon=10^{-4}$, so that we can effectively use this method for $\lambda\ge 10^{-4}$. As it
is shown in Figure~\ref{fig_dfmv_valuetools:dimension}, the values
obtained with this development are very close to the true values but
always greater as it is necessary for the guarantee. The procedure
should thus be the following: compute the error bounds given by
\eqref{eq:2}, \eqref{eq:4} and \eqref{eq:6} and find the one which gives
a value negligible with respect to the threshold $\epsilon$,
then use the corresponding dimensioning formula. If none is suitable,
use a finer Edgeworth expansion or resort to the concentration inequality approach.

Note that the Edgeworth method requires the computations of the first
three (or five) moments, whose lengthiest part is to compute the
$\zeta_{k,\, l}$ which is also a step required by the exact
method. Thus Edgeworth methods are dramatically simpler than the exact
method and may be as precise. However, both the exact and Edgeworth
methods suffer from the same flaw: There are precise as long as the
parameters, mainly $\lambda$ and $\gamma$, are perfectly well
estimated. The value of $\gamma$ is often set empirically (to say the
least) so that it seems important to have dimensioning values robust
to some estimate errors. This is the goal of the last method we
propose.

According to \eqref{eq:5}, if we find $\alpha$ such that
\begin{equation*}
  g(\frac{\alpha L}{\int_E f^2\lambda\d\nu})=-\frac{\log(\epsilon)L^2}{\int_E f^2\lambda \d\nu}
\end{equation*}
and
\begin{equation}\label{eq_dfmv_valuetools:6}
  \Navail=\int_E f\d\nu+\frac{\alpha}{L^2}\int_E f^2\lambda \d\nu,
\end{equation}
we are sure that the loss probability will fall under
$\epsilon$. However, we do not know a priori how larger this value of
$\Navail$ than the true value. It turns out that the relative
oversizing increases with $\gamma$ from a few percents to $40\%$ for
the large value of $\gamma$ and hence small values of $\Navail$.  For
instance, for $\gamma=4.2$, the value of $\Navail$ given by
\eqref{eq_dfmv_valuetools:6} is $40$ whereas the exact value is $32$
hence an oversizing of $25\%$. However, for $\gamma=4.12$, which is
$2\%$ away from $4.2$, the required number of subchannels is also
$40$. The oversizing is thus not as bad as it may seem since it may be
viewed as a protection against traffic increase, epistemic risk (model
error) and estimate error.

\begin{figure}[!ht]
  \centering
  \includegraphics[width=\columnwidth]{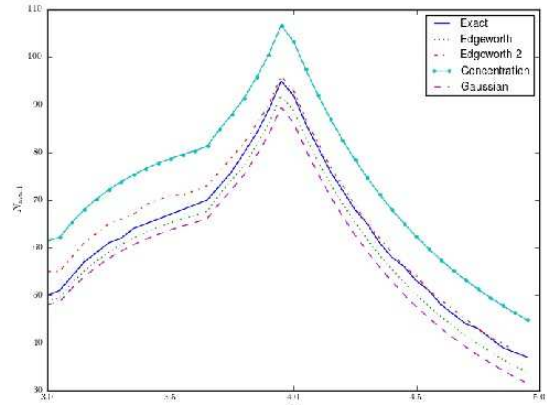}
  \caption{Estimates of $\Navail$ as a function of $\gamma$ by the
    different methods}
  \label{fig_dfmv_valuetools:dimension}
\end{figure}


\begin{thebibliography}{10}

\bibitem{LowcomplexityresourceallocationwithopportunisticfeedbackoverdownlinkOFDMAnetworks}
R. Agarwal, V.~R. Majjigi, Z. Han, R. Vannithamby, and J.~M. Cioffi.
\newblock Low complexity resource allocation with opportunistic feedback over
  downlink {OFDMA} networks.
\newblock {\em IEEE Journal on Selected Areas in Communications},
  26(8):1462--1472, 2008.

\bibitem{FnT1}
F.~Baccelli and B.~B{\l}aszczyszyn.
\newblock {\em Stochastic Geometry and Wireless Networks, Volume I --- Theory},
  volume 3, No 3--4 of {\em Foundations and Trends in Networking}.
\newblock NoW Publishers, 2009.

\bibitem{DimensioningOfTheDownlinkInOFDMACellularNetworksViaAnErlangLossModel}
B.~B{\l}aszczyszyn and M.K. Karray.
\newblock Dimensioning of the downlink in {OFDMA} cellular networks via an
  {E}rlang's loss model.
\newblock In {\em Proc. of European Wireless Conference}, Aalborg, 2009.

\bibitem{FadingEffectOnTheDynamicPerformanceEvaluationOfOFDMACellularNetworks}
B.~B{\l}aszczyszyn and M.K. Karray.
\newblock Fading effect on the dynamic performance evaluation of {OFDMA}
  cellular networks.
\newblock In {\em Proc. of the 1st International Conference on Communications
  and Networking (ComNet)}, 2009.

\bibitem{Decreusefond:2012sys}
L.~Decreusefond and P.~Moyal.
\newblock {\em Stochastic modeling and analysis of telecom networks}.
\newblock Wiley, 2012.

\bibitem{MR2571413}
A.~Dembo and O.~Zeitouni.
\newblock {\em Large deviations techniques and applications}, volume~38 of {\em
  Stochastic Modelling and Applied Probability}.
\newblock Springer-Verlag, Berlin, 2010.


\bibitem{DimensioningofOFDMOFDMABasedCellularNetworksUsingExponentialEffectiveSINR}
R.~Giuliano and F.~Mazzenga.
\newblock Dimensioning of {OFDM/OFDMA}-based cellular networks using exponential
  effective {SINR}.
\newblock {\em IEEE Trans. Wireless Commun.}, 58(9):4204 -- 4213, October 2009.

\bibitem{DownlinkschedulingandresourceallocationforOFDMsystems}
Jianwei Huang, V.G. Subramanian, R.~Agrawal, and R.A. Berry.
\newblock Downlink scheduling and resource allocation for {OFDM} systems.
\newblock {\em Wireless Communications, IEEE Transactions on}, 8(1):288 --296,
  2009.

\bibitem{Karray2010Analytical}
M.~K. Karray.
\newblock Analytical evaluation of {QoS} in the downlink of {OFDMA} wireless
  cellular networks serving streaming and elastic traffic.
\newblock {\em IEEE Trans. Wireless Commun.}, 2010.

\bibitem{MR0461643}
H.H. Kuo.
\newblock {\em Gaussian measures in {B}anach spaces}.
\newblock Lecture Notes in Mathematics, Vol. 463. Springer-Verlag, Berlin,
  1975.

\bibitem{DimensioningMethodologyforOFDMANetworksMaqbool}
M. Maqbool, M. Coupechoux, S. Doirieux, and B. Baynat.
\newblock Dimensioning methodology for {OFDMA} networks.
\newblock In {\em Proc. of the Wireless World Research Forum (WWRF22)}, 2009.

\bibitem{taqqu}
G.~Peccati, J.L. Sol\'{e}, M.S. Taqqu, and F.~Utzet.
\newblock Stein's method and normal approximation of {P}oisson functionals.
\newblock {\em Annals of Probability}, 38(2):443--478, 2010.

\bibitem{MR2531026}
N.~Privault.
\newblock {\em Stochastic analysis in discrete and continuous settings with
  normal martingales}, volume 1982 of {\em Lecture Notes in Mathematics}.
\newblock Springer-Verlag, Berlin, 2009.

\bibitem{OptimalResourceAllocationforOFDMADownlinkSystems}
K. Seong, M.~Mohseni, and J.M. Cioffi.
\newblock Optimal resource allocation for {OFDMA} downlink systems.
\newblock In {\em Information Theory, 2006 IEEE International Symposium on},
  pages 1394 --1398, 2006.

\bibitem{AdaptiveresourceallocationinmultiuserOFDMsystemswithproportionalrateconstraints}
Z. Shen, J.~G. Andrews, and B.~L. Evans.
\newblock Adaptive resource allocation in multiuser OFDM systems with
  proportional rate constraints.
\newblock {\em Wireless Communications, IEEE Transactions on}, 4(6):2726--2737,
  2005.

\bibitem{Wu:2000lr}
L.~Wu.
\newblock A new modified logarithmic {S}obolev inequality for {P}oisson point
  processes and several applications.
\newblock {\em Probability Theory and Related Fields}, 118(3):427--438, 2000.

\end{thebibliography}

\appendix

\section{Hermite polynomials}
\label{sec:hermite-polynomials}
Let $\Phi$ be the Gaussian probability density function:
$\Phi(x)=\exp(-x^2/2)/\sqrt{2\pi}$ and $\mu$ the Gaussian measure on $\R$.  Hermite polynomials $(H_k,\, k\ge
0)$ are defined by the recursion formula:
\begin{equation*}
  H_k(x)\Phi(x)=\frac{d^k}{dx^k}\Phi(x).
\end{equation*}
For the sake of completness, we recall that
\begin{multline*}
  H_0(x)=1,\, H_1(x)=x,\, H_2(x)=x^2-1,\, H_3(x)=x^3-3x\\
  H_4(x)=x^4-6x^2+3,\ H_5(x)=x^5-10x^3+15x.
\end{multline*}
Thus, for $F\in {\mathcal C}^k_b$, using integration by parts, we have
\begin{equation}\label{eq_dfmv_valuetools:9}
  \int_\R F^{(k)}(x)\d\mu(x)= \int_\R F(x)H_k(x)\d\mu(x).
\end{equation}
Let $Q(x)=\int_{-\infty}^x \Phi(u)\d u=\int_\R \car_{(-\infty;\,
  x]}(u)\Phi(u)\d u$. Then, $Q^\prime=\Phi$ and
\begin{multline}\label{eq:7}
  \int_\R \car_{(-\infty;\, x]}(u)H_k(u)\d\mu(u) \\=\int_\R
  \car_{(-\infty;\, x]}(u)\frac{d^{k+1}}{dx^{k+1}}Q(u)\d u\\
  =Q^{(k)}(x)= H_{k-1}(x)\Phi(x).
\end{multline}

\section{Edgeworth expansion}
\label{sec:edgeworth-expansion}
For details on Poisson processes, we refer to
\cite{FnT1,Decreusefond:2012sys}. For $E$ a Polish space equipped with
a Radon measure $\nu$, $\Gamma_E$ denotes the set of locally finite
discrete measures on $E$. The generic element $\omega$ of $\Gamma_E$
may be identified with a set $\omega=\{x_n,\, n\ge 1\}$ such that
$\omega \cap K$ has finite cardinal for any $K$ compact in $E$. We
denote by $\int_E f\d\omega$ the sum $\sum_{x\in \omega} f(x)$
provided that it exists as an element of $\R\cup\{+\infty\}$. A
Poisson process of intensity $\nu$ is a probability $\P_\nu$ on $\Gamma_E$, such that for any
$f\in {\mathcal C}_K(E, \; \R)$,
\begin{equation*}
  \esp{\nu}{\exp(-\int_E f\d\omega)}=\exp(-\int_E 1-e^{-f(x)}\d\nu(x)).
\end{equation*}
For $f\in L^1(\nu)$, the Campbell formula states that
\begin{equation*}
  \esp{\nu}{\int f\d \omega}=\int f\d\nu.
\end{equation*}
We introduce the discrete gradient $D$ defined by
\begin{equation*}
  D_xF(\omega)=F(\omega\cup\{x\})-F(\omega), \text{ for all } x\in E.
\end{equation*}
In particular, for $f\in L^1(\nu)$, we have
\begin{equation*}
  D_x\int_Ef\d\omega=f(x).
\end{equation*}
The domain of $D$, denoted by $\dom D$ is the set of functionals $F\,
:\, \Gamma_E\to R$ such that
\begin{equation*}
  \esp{\nu}{\int_E |D_xF(\omega)|^2\d\nu(x)}<\infty.
\end{equation*}
The integration by parts then says that, for any $F\in \dom D$, any
$u\in L^2(\nu)$,
\begin{multline}\label{eq_dfmv_valuetools:1}
  \esp{\nu}{F\int_E u(x) (\d\omega(x)-\d\nu(x))}\\
  =\esp{\nu}{\int_ED_xF \, u(x)\d\nu(x)}.
\end{multline}
We denote by $\sigma=\|f\|_{L^2(\nu)}\sqrt{\lambda}$ and
$f_\sigma=f/\sigma$. Note that $\|f_\sigma\|_{L^2(\nu)}=1/\lambda$ and
that
\begin{equation*}
  m(p,\,\lambda):=\int_E |f_\sigma(x)|^p \lambda\d\nu(x)=\|f\|_{L^2(\nu)}^{-p}\|f\|^p_{L^p(\nu)}\lambda^{1-p/2}.
\end{equation*}
The proof of the following theorem may be found in
\cite{Decreusefond:2012sys,taqqu,MR2531026}.
\begin{theorem}
  \label{thm_dfmv_valuetools:1}
  Let $f\in L^2(\nu)$. For $\lambda >0$, let
  \begin{equation*}
    N^\lambda=\int_E f_\sigma(x)(\d\omega(x)-\lambda\d\nu(x)).
  \end{equation*}
  Then, for any Lipschitz function $F$ from $\R$ to $\R$, we have
  \begin{equation*}
    \left| \esp{\lambda\nu}{F(N^\lambda)}-\int_\R F\d\mu\right|
    \le \frac12\sqrt{\frac\pi 2}\ m(3,\, \lambda)\,\|F\|_{\lip}.
  \end{equation*}
\end{theorem}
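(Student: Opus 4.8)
The plan is to combine Stein's method for the Gaussian law with the Poisson integration by parts formula \eqref{eq_dfmv_valuetools:1}. First I would record two facts about $N^\lambda$. Campbell's formula gives $\esp{\lambda\nu}{N^\lambda}=0$, and applying \eqref{eq_dfmv_valuetools:1} to the functional $N^\lambda$ with $u=f_\sigma$, together with $D_xN^\lambda=f_\sigma(x)$, yields $\esp{\lambda\nu}{(N^\lambda)^2}=\int_E f_\sigma^2\,\lambda\d\nu=\lambda\|f_\sigma\|^2_{L^2(\nu)}=1$; so $N^\lambda$ is centered with unit variance and the natural comparison is with the standard Gaussian measure $\mu$. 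Since $F$ is only Lipschitz, I would at the outset approximate it by smooth functions (or argue with the almost-everywhere second derivative of the Stein solution) so that all the manipulations below are licit.

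For the core estimate, let $g=g_F$ be the solution of the Stein equation $g'(x)-xg(x)=F(x)-\int_\R F\d\mu$. Evaluating at $N^\lambda$ and taking expectations turns the left-hand side of the theorem into $\bigl|\esp{\lambda\nu}{g'(N^\lambda)-N^\lambda g(N^\lambda)}\bigr|$, so everything reduces to controlling this quantity. I would treat the term $\esp{\lambda\nu}{N^\lambda g(N^\lambda)}$ with \eqref{eq_dfmv_valuetools:1}, choosing $G=g(N^\lambda)$ and $u=f_\sigma$; since $D_x g(N^\lambda)=g(N^\lambda+f_\sigma(x))-g(N^\lambda)$, this gives
\[
\esp{\lambda\nu}{N^\lambda g(N^\lambda)}
=\esp{\lambda\nu}{\int_E\bigl(g(N^\lambda+f_\sigma(x))-g(N^\lambda)\bigr)f_\sigma(x)\,\lambda\d\nu(x)}.
\]
Using the unit-variance identity $\int_E f_\sigma^2\,\lambda\d\nu=1$ to write $g'(N^\lambda)=\int_E g'(N^\lambda)f_\sigma(x)^2\,\lambda\d\nu(x)$ and subtracting, a second-order Taylor expansion of $g(N^\lambda+f_\sigma(x))-g(N^\lambda)$ shows that the first-order terms cancel exactly, leaving an integrand bounded pointwise by $\frac12\|g''\|_\infty|f_\sigma(x)|^3$. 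Integrating against $\lambda\d\nu$ and recalling $m(3,\lambda)=\int_E|f_\sigma|^3\,\lambda\d\nu$ gives $\bigl|\esp{\lambda\nu}{g'(N^\lambda)-N^\lambda g(N^\lambda)}\bigr|\le\frac12\|g''_F\|_\infty\,m(3,\lambda)$.

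It then remains to insert the standard Stein estimate $\|g''_F\|_\infty\le\sqrt{\pi/2}\,\|F\|_{\lip}$, valid for the solution of the Stein equation attached to a Lipschitz test function, which produces the announced constant $\frac12\sqrt{\pi/2}$. I expect this last point---the sharp bound on the second derivative of $g_F$, obtained from the explicit representation $g_F(x)=e^{x^2/2}\int_{-\infty}^x\bigl(F(y)-\int_\R F\d\mu\bigr)e^{-y^2/2}\,\d y$---to be the only genuinely delicate step; once the Poisson integration by parts and the variance normalization are in place, the probabilistic part is essentially forced. The remaining verifications are routine: the smoothing of $F$, and checking that $N^\lambda$ and $g(N^\lambda)$ lie in $\dom D$, both of which follow from $f\in L^2(\nu)$ and the boundedness of $g'$.
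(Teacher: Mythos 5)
Your overall architecture is sound and is in fact the same one the paper implicitly relies on: the paper gives no proof of Theorem~\ref{thm_dfmv_valuetools:1} (it only cites \cite{Decreusefond:2012sys,taqqu,MR2531026}), and the cited proof of Peccati--Sol\'e--Taqqu--Utzet is exactly your combination of the Gaussian Stein equation with the Poisson integration by parts \eqref{eq_dfmv_valuetools:1}. Your preliminary reductions are all correct: $N^\lambda$ is centered with unit variance since $\int_E f_\sigma^2\,\lambda\d\nu=1$, the identity $\esp{\lambda\nu}{N^\lambda g(N^\lambda)}=\esp{\lambda\nu}{\int_E\bigl(g(N^\lambda+f_\sigma(x))-g(N^\lambda)\bigr)f_\sigma(x)\,\lambda\d\nu(x)}$ is the right use of \eqref{eq_dfmv_valuetools:1}, and the Taylor step with exact cancellation of the first-order term legitimately yields $\bigl|\esp{\lambda\nu}{F(N^\lambda)}-\int_\R F\d\mu\bigr|\le\frac12\|g_F''\|_\infty\, m(3,\lambda)$.

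The gap is the very last ingredient, which you yourself flag as the delicate one. The inequality $\|g_F''\|_\infty\le\sqrt{\pi/2}\,\|F\|_{\lip}$ is not the standard Stein factor and is in fact false: the sharp constant in $\|g_F''\|_\infty\le c\,\|F\|_{\lip}$ is $c=2$. To see that you cannot do better than $2$, take $F(x)=|x-a|$; writing the Stein equation as $g'(x)=xg(x)+F(x)-\int F\d\mu$ and differentiating for $x<a$ gives $g''(x)=g(x)+xg'(x)-1$, and an explicit computation with the Gaussian tail asymptotics shows $g(a)\to 1$ and $a\,g'(a)\to -2$ as $a\to\infty$, whence $g''(a^-)\to -2$ while $\|F\|_{\lip}=1$. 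Since $2>\sqrt{\pi/2}$, your final step fails, and with the correct factor $\|g_F''\|_\infty\le 2\|F\|_{\lip}$ your argument proves the theorem with constant $1$ in place of $\frac12\sqrt{\pi/2}\approx 0.627$ --- which is precisely the constant obtained in \cite{taqqu}. I would add that the provenance of the constant $\frac12\sqrt{\pi/2}$ in the statement is itself unclear: the paper uses the different value $\frac12\sqrt{2/\pi}$ when it applies the theorem in \eqref{eq:2}, and none of the cited references yields anything below $1$ by this route. So either supply a genuine proof of a second-derivative Stein bound with constant $\sqrt{\pi/2}$ (I do not believe one exists), or settle for the constant $1$, which is what your method actually delivers and which suffices for all the uses made of the theorem later in the paper up to a harmless numerical factor.
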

To prove the Edgeworth expansion and its error bound, we introduce
some notions of Gaussian calculus. For $F\in {\mathcal
  C}_b^2(\R;\,\R)$, we consider
\begin{equation*}
  AF(x)=xF^\prime(x)-F^{\prime\prime}(x), \text{ for any } x\in \R.
\end{equation*}
The Ornstein-Uhlenbeck semi-group is defined by
\begin{equation*}
  P_tF(x)=\int_\R F(e^{-t} x+\sqrt{1-e^{-2t}}y)\d\mu(y) \text{ for any } t\ge 0.
\end{equation*}
The
infinitesimal generator $A$ and $P_t$ are linked by the following identity
\begin{equation}\label{eq_dfmv_valuetools:3}
  F(x)-\int_\R F(y)\d\mu(y)=-\int_0^\infty AP_tF(x)\d t.
\end{equation}

\begin{theorem}
  \label{thm_dfmv_valuetools:2}
  For $F\in {\mathcal C}^3_b(\R,\, \R)$,
  \begin{multline}\label{eq:4}
    \left| \esp{\lambda\nu}{F(N^\lambda)}-\int_\R F(y)\d\mu(y)\right.\\
    \left.    -\frac16\  m(3,\, \lambda)\ \int_\R F(y)H_3(y)\d\mu(y)\vphantom{\esp{lambda}{}}\right| \\
    \le \left( \frac{m(3,\, 1)^2}{6}+\frac{m(4,\, 1)}{9}
      \sqrt{\frac{2}{\pi}}\right)\frac{\|F^{(3)}\|_\infty}{\lambda}\cdotp
  \end{multline}
\end{theorem}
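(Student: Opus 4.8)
The plan is to combine Stein's method for the Ornstein--Uhlenbeck generator with the integration by parts formula \eqref{eq_dfmv_valuetools:1}, pushing the Taylor expansion of the discrete gradient one order further than in the proof of Theorem~\ref{thm_dfmv_valuetools:1}. First I would start from the semigroup identity \eqref{eq_dfmv_valuetools:3}, evaluate it at $N^\lambda$ and take expectations, so as to express $\esp{\lambda\nu}{F(N^\lambda)}-\int_\R F\d\mu$ as a time integral of $\esp{\lambda\nu}{AP_tF(N^\lambda)}$. Setting $g=P_tF$, everything reduces to evaluating $\esp{\lambda\nu}{Ag(N^\lambda)}=\esp{\lambda\nu}{N^\lambda g'(N^\lambda)-g''(N^\lambda)}$.

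Next I would treat $\esp{\lambda\nu}{N^\lambda g'(N^\lambda)}$ with \eqref{eq_dfmv_valuetools:1}, using that $D_x\int_E f_\sigma\d\omega=f_\sigma(x)$ and hence $N^\lambda(\omega\cup\{x\})=N^\lambda(\omega)+f_\sigma(x)$, so that $D_x[g'(N^\lambda)]=g'(N^\lambda+f_\sigma(x))-g'(N^\lambda)$. Taylor expanding to second order with Lagrange remainder and integrating against $f_\sigma(x)\lambda\d\nu$, the zeroth-order term produces $g''(N^\lambda)\int_E f_\sigma^2\lambda\d\nu=g''(N^\lambda)$ (because $m(2,\lambda)=1$), which cancels $-\esp{\lambda\nu}{g''(N^\lambda)}$; the next term yields $\tfrac12\big(\int_E f_\sigma^3\lambda\d\nu\big)\esp{\lambda\nu}{g'''(N^\lambda)}$, whose coefficient is the third cumulant of $N^\lambda$, equal to $m(3,\lambda)$ since $f\ge 0$; and there remains $\tfrac16\esp{\lambda\nu}{\int_E g^{(4)}(\xi_x)f_\sigma^4\lambda\d\nu}$.

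For the leading term I would use the commutation $(P_tF)^{(k)}=e^{-kt}P_t(F^{(k)})$, the invariance $\int_\R P_th\d\mu=\int_\R h\d\mu$, and \eqref{eq_dfmv_valuetools:9}: replacing $\esp{\lambda\nu}{P_t(F''')(N^\lambda)}$ by $\int_\R F'''\d\mu=\int_\R FH_3\d\mu$, with the error handled by Theorem~\ref{thm_dfmv_valuetools:1} applied to the Lipschitz map $P_t(F''')$, and using $\int_0^\infty e^{-3t}\d t=\tfrac13$, turns the $\esp{\lambda\nu}{g'''(N^\lambda)}$ term into exactly $\tfrac16 m(3,\lambda)\int_\R FH_3\d\mu$. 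The two remaining errors are controlled by the Gaussian smoothing estimate $\|(P_th)'\|_\infty\le \tfrac{e^{-t}}{\sqrt{1-e^{-2t}}}\sqrt{2/\pi}\,\|h\|_\infty$, obtained by one integration by parts against the Mehler kernel. With $h=F'''$ this bounds $\|P_t(F''')\|_{\lip}$ and, through Theorem~\ref{thm_dfmv_valuetools:1} and $m(3,\lambda)=m(3,1)\lambda^{-1/2}$, gives the first error $\tfrac{m(3,1)^2}{6}\|F^{(3)}\|_\infty/\lambda$; applied once more it bounds $\|(P_tF)^{(4)}\|_\infty=e^{-3t}\|(P_t(F'''))'\|_\infty$, which with $\int_E f_\sigma^4\lambda\d\nu=m(4,\lambda)=m(4,1)/\lambda$ gives the second error $\tfrac{m(4,1)}{9}\sqrt{2/\pi}\,\|F^{(3)}\|_\infty/\lambda$. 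Both constants emerge from the elementary integral $\int_0^\infty\tfrac{e^{-4t}}{\sqrt{1-e^{-2t}}}\d t=\tfrac23$.

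The main obstacle is the regularity bootstrap: we control only three derivatives of $F$, yet the fourth-order remainder forces a fourth derivative of $P_tF$ and the leading term forces applying the Lipschitz bound of Theorem~\ref{thm_dfmv_valuetools:1} to $P_t(F''')$. Both are recovered solely from the smoothing estimate, whose price is the factor $\tfrac{e^{-t}}{\sqrt{1-e^{-2t}}}$ with its $t^{-1/2}$ singularity at the origin. Checking that this singularity stays integrable against $e^{-3t}$, that the constants collapse to exactly $\tfrac16$ and $\tfrac19\sqrt{2/\pi}$ through the value $\tfrac23$, and that the integration by parts and Fubini interchanges are licit (which uses the boundedness of $f$, hence of the jumps $f_\sigma(x)$, to make all Taylor remainders integrable) is the delicate part of the argument.
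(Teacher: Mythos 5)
Your proposal is correct and follows essentially the same route as the paper's own proof: the Stein--Ornstein--Uhlenbeck semigroup representation, integration by parts via the discrete gradient, a third-order Taylor expansion of $D_x[(P_tF)'(N^\lambda)]$, Theorem~\ref{thm_dfmv_valuetools:1} applied to the smoothed third derivative, and the Mehler-kernel gain of one derivative, with the constants assembled through $\int_0^\infty e^{-4t}(1-e^{-2t})^{-1/2}\d t=2/3$. The only differences (Lagrange versus integral form of the Taylor remainder, invoking the invariance of $\mu$ under $P_t$ instead of the explicit rotation-invariance computation) are cosmetic.
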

\begin{proof}
  According to the Taylor formula,
  \begin{multline}
    \label{eq:1}
    D_x G(N^\lambda)=G(N^\lambda+f_\sigma(x))-G(N^\lambda)\\
    =G^\prime(N^\lambda)f_\sigma(x)
    +\frac{1}{2}f^2_\sigma(x)\, G^{\prime\prime}(N^\lambda)\\
    +\frac{1}{2}f_\sigma(x)^3\int_0^1 r^2G^{(3)}(rN^\lambda +(1-r)
    f_\sigma(x))\d r.
  \end{multline}
  Hence, according to \eqref{eq_dfmv_valuetools:1} and \eqref{eq:1},
  \begin{multline*}
    \esp{\lambda\nu}{N^\lambda (P_tF)^\prime(N^\lambda)}\\
    \begin{aligned}
      &= \esp{\lambda\nu}{\int_E f_\sigma(x) D_x (P_tF)^\prime(N^\lambda)\lambda\d\nu(x)}\\
      &=\esp{\lambda\nu}{(P_tF)^{\prime\prime}(N^\lambda)}\\
      &+ \frac{1}{2} \int_E f_\sigma^3(x)\lambda\d \nu(x)\esp{\lambda\nu}{(P_tF)^{(3)}(N^\lambda)}\\
      &+\frac{1}{2} \int_E f_\sigma^4(x)\lambda\d \nu(x)\\
      &\qquad \times \esp{\lambda\nu}{\int_0^1(P_tF)^{(4)}(rN^\lambda+(1-r)f_\sigma(x))r^2\d r}  \\
      &=A_1+A_2+A_3.
    \end{aligned}
  \end{multline*}
  It is well known that for $F\in {\mathcal C}^k$, $(x\mapsto
  P_tF(x))$ is $k+1$-times differentiable and that we have two
  expressions of the derivatives (see \cite{MR0461643}):
  \begin{multline*}
    (P_tF)^{(k+1)}(x)\\
    =\frac{e^{-(k+1)t}}{\sqrt{1-e^{-2t}}}\int_\R
    F^{(k)}(e^{-t}x+\sqrt{1-e^{-2t}}y)y\d\mu(y).
  \end{multline*}
  and $(P_tF)^{(k+1)}(x)=e^{-(k+1)t}P_tF^{(k)}(x)$. The former
  equation induces that
  \begin{multline*}
    \|(P_tF)^{(k+1)}\|_\infty\le \frac{e^{-(k+1)t}}{\sqrt{1-e^{-2t}}}\|F^{(k)}\|_\infty \int_\R|y|\d\mu(y)\\
    = \frac{e^{-(k+1)t}}{\sqrt{1-e^{-2t}}}\sqrt{\frac2\pi}\
    \|F^{(k)}\|_\infty .
  \end{multline*}
  Hence,
  \begin{equation*}
    |A_3|\le \frac{e^{-4t}}{6\sqrt{1-e^{-2t}}}\sqrt{\frac2\pi}m(4,\, \lambda)\ \|F^{(3)}\|_\infty.
  \end{equation*}
  Moreover, according to Theorem \ref{thm_dfmv_valuetools:1},
  \begin{multline*}
    \left  | \esp{\lambda\nu}{(P_tF)^{(3)}(N^\lambda)}-\int_\R (P_tF)^{(3)}(x)\d\mu(x)\right|\\
    \begin{aligned}
      &\le  \frac12\sqrt{\frac\pi 2}\,m(3,\, \lambda)\|(P_tF)^{(4)}\|_\infty\\
      &= \frac12\sqrt{\frac\pi 2} \,m(3,\, \lambda)e^{-3t}\|(P_tF^{(3)})^\prime\|_\infty \\
      &\le\frac12\, m(3,\, \lambda)\frac{e^{-4t}}{\sqrt{1-e^{-2t}}}
      \|F^{(3)}\|_\infty .
    \end{aligned}
  \end{multline*}
  Then, we have,
  \begin{multline*}
    |A_2-\frac12{m(3,\, \lambda)} \int_\R (P_tF)^{(3)}(x)\d\mu(x)|\\
    \le \frac14\, m(3,\, \lambda)^2\frac{e^{-4t}}{\sqrt{1-e^{-2t}}}
    \|F^{(3)}\|_\infty .
  \end{multline*}
  Hence,
  \begin{multline*}
    \esp{\lambda\nu}{N^\lambda (P_tF)^\prime(N^\lambda)-(P_tF)^{\prime\prime}(N^\lambda)}\\
    =\frac12m(3,\, \lambda)\int_\R(P_tF)^{(3)}(x)\d\mu(x)+R(t),
  \end{multline*}
  where
  \begin{equation*}
    R(t)\le\left( \frac{m(3,\, \lambda)^2}{4}+\frac{m(4,\, \lambda)}{6} \sqrt{\frac{2}{\pi}}\right)\|F^{(3)}\|_\infty\frac{e^{-4t}}{\sqrt{1-e^{-2t}}}\cdotp
  \end{equation*}
  Now then,
  \begin{multline*}
    \int_\R(P_tF)^{(3)}(x)\d\mu(x)\\
    \begin{aligned}
&    = e^{-3t} \int_R\int_\R F^{(3)}(e^{-t}x+\sqrt{1-e^{-2t}}y)\d\mu(y)\\
&    =e^{-3t}\int_\R F^{(3)}(y)\d\mu(y)\\ &=e^{-3t}\int_R F(y)H_3(y)\d\mu(y),
    \end{aligned}
  \end{multline*}
  since the Gaussian measure on $\R^2$ is rotation invariant and
  according to \eqref{eq_dfmv_valuetools:9}.  Remarking
  that $$\int_0^\infty e^{-4t}(1-e^{-2t})^{-1/2}\d t =2/3$$ and
  applying \eqref{eq_dfmv_valuetools:3} to $x=N^\lambda$, the result
  follows.
\end{proof}
This development is not new in itself but to the best of our
knowledge, it is the first time that there is an estimate of the error
bound. Following the same lines, we can pursue the expansion up to any
order provided that $F$ be sufficiently differentiable. Namely, for
$F\in{\mathcal C}^5_b$, we have
\begin{multline}\label{eq:edgeworth2}
  \esp{\lambda\nu}{F(N^\lambda)}=\int_\R F(y)\d\mu(y)\\
  +\frac{m(3,1)}{6\sqrt{\lambda}}\int_\R F^{(3)}(y)\d\mu(y)+\frac{m(3,1)^2}{72 \lambda}\int_\R F^{(5)}(y)\d\mu(y)\\
  +\frac{m(4,\, 1)}{24\lambda}\int_\R
  F^{(4)}(y)\d\mu(y)+F_\lambda\|F^{(5)}\|_\infty.
\end{multline}
where
\begin{multline}\label{eq_dfmv_valuetools:8}
  F_\lambda
  \le \frac{m(3,1)}{\lambda^{3/2}}\left(\frac{2}{45}\, m(3,1)^2\right.\\
  \left.+(\frac{4}{135}+\frac{\pi^2}{128})\sqrt{\frac{2}\pi}\
    m(4,1)\right).
\end{multline}

\section{Concentration inequality}
\label{sec:conc-ineq}
We are now interested in an upper bound, which is called
concentration inequality.
\begin{theorem}
  \label{thm_dfmv_valuetools:3}
  Let $M,a>0$.  Assume that $\vert f(z)\vert\leq M$ $\nu-$a.s and
  $f\in L^2(E,\nu)$, then
  \begin{equation}\label{equ: Upper bound linear PPP 2}
    \P(F>\Esp{F}+a)   \leq\exp\left\{-\frac{M^2}{\Var{F}} g\left(\frac{a.M}{\Var{F}}\right)\right\}
  \end{equation}
  where $g(u)=(1+u)\ln(1+u)-u$.
\end{theorem}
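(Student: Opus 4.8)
The plan is to follow the classical Herbst / Laplace-transform argument, adapted to the Poisson setting via the logarithmic Sobolev inequality referenced in the statement. First I would introduce the Laplace transform $L(\theta)=\esp{\nu}{e^{\theta F}}$ for $\theta>0$ and aim to control its growth, so that Markov's inequality $\P(F>\Esp{F}+a)\le e^{-\theta(\Esp{F}+a)}L(\theta)$ can be optimized over $\theta$ at the end. The engine driving the bound is the modified logarithmic Sobolev inequality for Poisson point processes (the inequality of \cite{Wu:2000lr}, for which Section~\ref{sec:conc-ineq} announces a new proof), which reads
\begin{equation*}
  \mathrm{Ent}_\nu(e^{\theta F})\le \esp{\nu}{\int_E e^{\theta F}\,\bigl(e^{\theta D_zF}-1-\theta D_zF\bigr)\,\d\nu(z)},
\end{equation*}
where $\mathrm{Ent}_\nu(X)=\esp{\nu}{X\ln X}-\esp{\nu}{X}\ln\esp{\nu}{X}$ and $D$ is the discrete gradient defined in Section~\ref{sec:edgeworth-expansion}.

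The key steps are as follows. For the linear functional $F=\int_E f\,\d\omega$ we have $D_zF=f(z)$ (exactly as noted in the excerpt), so the increment is deterministic and the right-hand side factors as $\esp{\nu}{e^{\theta F}}\int_E\bigl(e^{\theta f(z)}-1-\theta f(z)\bigr)\d\nu(z)$. Writing $H(\theta)=\ln L(\theta)$, the entropy inequality becomes the differential inequality
\begin{equation*}
  \theta H'(\theta)-H(\theta)\le \int_E\bigl(e^{\theta f(z)}-1-\theta f(z)\bigr)\d\nu(z).
\end{equation*}
Next I would use the bound $|f|\le M$ together with the fact that $u\mapsto (e^{u}-1-u)/u^2$ is increasing to control the integrand: since $\theta f(z)\le \theta M$, one gets $e^{\theta f(z)}-1-\theta f(z)\le f(z)^2\,(e^{\theta M}-1-\theta M)/M^2$, and integrating against $\nu$ and multiplying by the ambient $\lambda$ replaces $\int f^2\,\d\nu$ by $\Var{F}/M^{2}$ up to the factor $M^2$ — more precisely the right-hand side is bounded by $\dfrac{\Var{F}}{M^2}\bigl(e^{\theta M}-1-\theta M\bigr)$, using $\Var{F}=\int_E f^2\lambda\,\d\nu$ from the Campbell/variance formula. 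Dividing the differential inequality by $\theta^2$ turns the left-hand side into $\bigl(H(\theta)/\theta\bigr)'$, which I integrate from $0$ to $\theta$ using the boundary behaviour $H(\theta)/\theta\to \Esp{F}$ as $\theta\downarrow 0$.

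Carrying out that integration yields $H(\theta)\le \theta\Esp{F}+\dfrac{\Var{F}}{M^2}\,\psi(\theta M)$ for an explicit $\psi$, and feeding this into Markov's inequality gives
\begin{equation*}
  \P(F>\Esp{F}+a)\le \exp\!\left(-\theta a+\tfrac{\Var{F}}{M^2}\psi(\theta M)\right).
\end{equation*}
The final step is the optimization over $\theta$, which is the usual Legendre computation that produces the rate function $g(u)=(1+u)\ln(1+u)-u$; the optimal $\theta$ is $\tfrac1M\ln\!\bigl(1+aM/\Var{F}\bigr)$, and substituting it reproduces exactly the stated bound. The main obstacle I anticipate is not the optimization but justifying the modified log-Sobolev inequality and the interchange of expectation with the gradient integral rigorously — in particular checking integrability so that $L(\theta)<\infty$ and that the entropy inequality applies to $e^{\theta F}$; here the hypothesis $|f|\le M$ is exactly what guarantees finiteness of the Laplace transform and validates every estimate above.
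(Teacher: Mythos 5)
Your architecture (Chernoff bound, control of the log-Laplace transform $H(\theta)=\ln\Esp{e^{\theta F}}$, then the Legendre optimization at $\theta=\frac1M\ln(1+aM/\Var{F})$ producing $g$) is sound, and the last two steps coincide with the paper's. But the engine you invoke is misstated, and the misstatement is fatal rather than cosmetic. Wu's modified logarithmic Sobolev inequality does not have penalty function $\Psi(u)=e^{u}-1-u$; the correct one is $\Psi(u)=u e^{u}-e^{u}+1$ (equivalently the form written with $D_z e^{F}\cdot D_zF$). The difference matters exactly in the case at hand: for the linear functional $F=\int_E f\d\omega$ the Laplace transform is explicit, $\ln\Esp{e^{\theta F}}=\int_E(e^{\theta f}-1)\d\nu$, from which one computes
\begin{equation*}
  \mathrm{Ent}\bigl(e^{\theta F}\bigr)=\Esp{e^{\theta F}}\int_E\bigl(\theta f e^{\theta f}-e^{\theta f}+1\bigr)\d\nu,
\end{equation*}
and since $u e^{u}-e^{u}+1>e^{u}-1-u$ for $u>0$ (at $u=1$: $1$ versus $e-2$), the entropy bound you start from is \emph{false} whenever $f>0$ on a set of positive measure. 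The suspiciously-better-than-Bennett constant your $\psi$ would produce is the symptom. With the correct $\Psi$ the Herbst integration does go through: $(\theta M e^{\theta M}-e^{\theta M}+1)/\theta^{2}$ is the derivative of $(e^{\theta M}-1-\theta M)/\theta$, so integrating $(H(\theta)/\theta)'$ gives $H(\theta)\le\theta\Esp{F}+\Var{F}\,(e^{\theta M}-1-\theta M)/M^{2}$, after which your optimization yields the claimed bound (whose exponent should in fact read $-\frac{\Var{F}}{M^{2}}\,g\bigl(aM/\Var{F}\bigr)$; the prefactor printed in the theorem is inverted, as a small-$a$ Gaussian-tail check confirms).

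You should also note that the paper deliberately bypasses the entropy machinery altogether: because $F$ is linear with deterministic increments $D_zF=f(z)$, it writes $\Esp{e^{\theta F}}e^{-\theta\Esp{F}}=\exp\bigl(\int_E(e^{\theta f}-1-\theta f)\d\nu\bigr)$ directly, applies the same monotonicity of $(e^{x}-1-x)/x^{2}$, and optimizes --- no differential inequality, no limit $H(\theta)/\theta\to\Esp{F}$, no integrability discussion beyond $|f|\le M$. If you repair the log-Sobolev inequality your route is correct but strictly longer and proves nothing more for this $F$; as written, it rests on a false lemma.
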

The above theorem can be directly derived from
\cite{Wu:2000lr}. However let us take this opportunity to prove this
theorem in a very nice, simple and elementary fashion, exactly the
same way as Bennett built his concentration inequality for the sum of
$n$ i.i.d random variables.
\begin{proof}
  Using Chernoff's bound we have:
  \begin{align*}
    \P(F>\Esp{F}+a)    &\leq \Esp{e^{\theta F}}/e^{\theta(\Esp{F}+a)}\\
    &= e^{\int_E\left(e^{\theta f(z)}-1-\theta f(z)\right)\d
      \nu(z)-\theta a}
  \end{align*}
  Now assume that $\vert f(z)\vert\leq M$ $\nu-$a.s . Observe that the
  function $({e^x-1-x})/{x^2}$ is increasing on $\R$ (the value at
  $0$ is $1/2$), we have that
  \[e^{\theta f(z)} -\theta f(z) -1 \leq \frac{e^{\theta M}-1-\theta
    M}{M^2}f^2(z) \ \nu \text{ a.s.}\]  Thus,
  \begin{multline*}
    \P(F>\Esp{F}+a)    \\\leq \exp\left\{\int_{E}\left(\frac{e^{\theta M}-\theta M-1}{M^2}f^2(z)\right)\d \nu(z)-\theta a\right\}\\
    = \exp\left\{\frac{e^{\theta M}-1-\theta M}{M^2}\Var{F}-\theta
      a\right\}\cdot
  \end{multline*}
  We find that $\theta =  
\ln\left(1+{aM}/{\Var{F}}\right)/M$ minimizes the right-hand-side 
  and thus  we obtain \eqref{equ: Upper bound linear PPP
    2}.
\end{proof}

\balancecolumns

\end{document}